\documentclass{article}

\usepackage[preprint]{neurips_2025}

\usepackage[utf8]{inputenc} % allow utf-8 input
\usepackage[T1]{fontenc}    % use 8-bit T1 fonts
\usepackage{hyperref}       % hyperlinks
\usepackage{url}            % simple URL typesetting
\usepackage{booktabs}       % professional-quality tables
\usepackage{amsfonts}       % blackboard math symbols
\usepackage{nicefrac}       % compact symbols for 1/2, etc.
\usepackage{microtype}      % microtypography
\usepackage{xcolor}         % colors
\usepackage{graphicx}
\usepackage{amsmath}
\usepackage{subcaption}
\usepackage{amsthm}

\newtheorem{definition}{Definition}
\newtheorem{proposition}[definition]{Proposition}
\title{Multi-granularity Adaptive Hypergraph Representation Learning via Granular-ball}

\author{%
  Sen Zhao\\
  Chongqing University of \\
  Posts and Telecommunications\\
  \And
  Yifan Guan*\\
  Chongqing University of \\
  Posts and Telecommunications\\
  \And
  Jinyuan Ni\\
 Chongqing University of \\
  Posts and Telecommunications\\
  \And
  Gaojie Xu\\
  Chongqing University of \\
  Posts and Telecommunications\\
  \AND
  Zhang Xu\\
  \And
  Xiaoyu Lian\\
  \And
  Yi Liu\\
  \And
  Yi Wang\\
  \And
  Wei Wang
}

\begin{document}

\maketitle

\begin{abstract}
  Hypergraph representation learning aims to capture high-order information in graphs by constructing hyperedges that simultaneously connect multiple nodes. These hyperedges adapt to the graph's topological features, facilitating the extraction of high-order relationships at multiple granularities. Most prior work relies on predefined definitions to generate hyperedges, overlooking the diversity in graph topological structures and the multi-granularity characteristics of hyperedges. As a result, this limits their ability to effectively and adaptively discover high-order relationships and efficiently process complex structural information. To address this limitation, we propose a novel framework called \underline{M}ulti-\underline{G}ranularity \underline{H}ypergraph \underline{R}epresentation \underline{L}earning (MGHRL). MGHRL introduces an Adaptive Granular Hypergraph Generation strategy, which generates hyperedges at multiple levels of granularity through the adaptive splitting of granular-ball, effectively capturing high-order relationships based on the graph's topological structure. Additionally, we propose a Multi-Granularity Hypergraph Network with multiple sub-networks, capturing features from hyperedges at different granularities and integrating them via hierarchical reversible connections. Experimental results show that MGHRL significantly outperforms baseline models on benchmark datasets.%, particularly in terms of test accuracy on social network datasets. %The code is available at https://anonymous.4open.science/r/MGHRL.
\end{abstract}

\section{Introduction}
\label{Introduction}

Traditional graphs capture only pairwise relationships, which restricts their capacity to model high-order interactions common in real-world systems like social networks or bioinformatics~\cite{kipf2017semi,hamilton2017inductive,wu2020comprehensive}. To overcome this constraint, researchers have proposed hypergraph structures, which use hyperedges to simultaneously connect multiple nodes, enabling better modeling of complex relations.~\cite{NIPS2006_dff8e9c2,battaglia2018relational}. 

Early work~\cite{ramadan2004hypergraph,zhou2006learning} modeled such structures with hypergraphs, but capturing these interactions poses technical challenges~\cite{madan2011sensing,li2010computational} and, even when observable, the hypergraph version of many datasets~\cite{newman2004coauthorship,sarigol2014predicting} are often unpublished. To address this issue, heuristic approaches~\cite{feng2019hypergraph,DBLP:journals/pami/GaoFJJ23} are used to generate hyperedges, such as k-Hop neighbors hypergraphs and k-Nearest neighbors hypergraphs. Hypergraph Neural Networks (HGNNs)~\cite{feng2019hypergraph,DBLP:journals/pami/GaoFJJ23,DBLP:conf/iclr/WangYLWL23} use these hyperedges for message passing between nodes and hyperedges. Although effective, previous approaches often rely on predefined rules and overlook the diversity of graph topological structures and the multi-granularity nature of hyperedges.

\begin{figure}[htbp]
  \centering
  \includegraphics[width=0.95\linewidth]{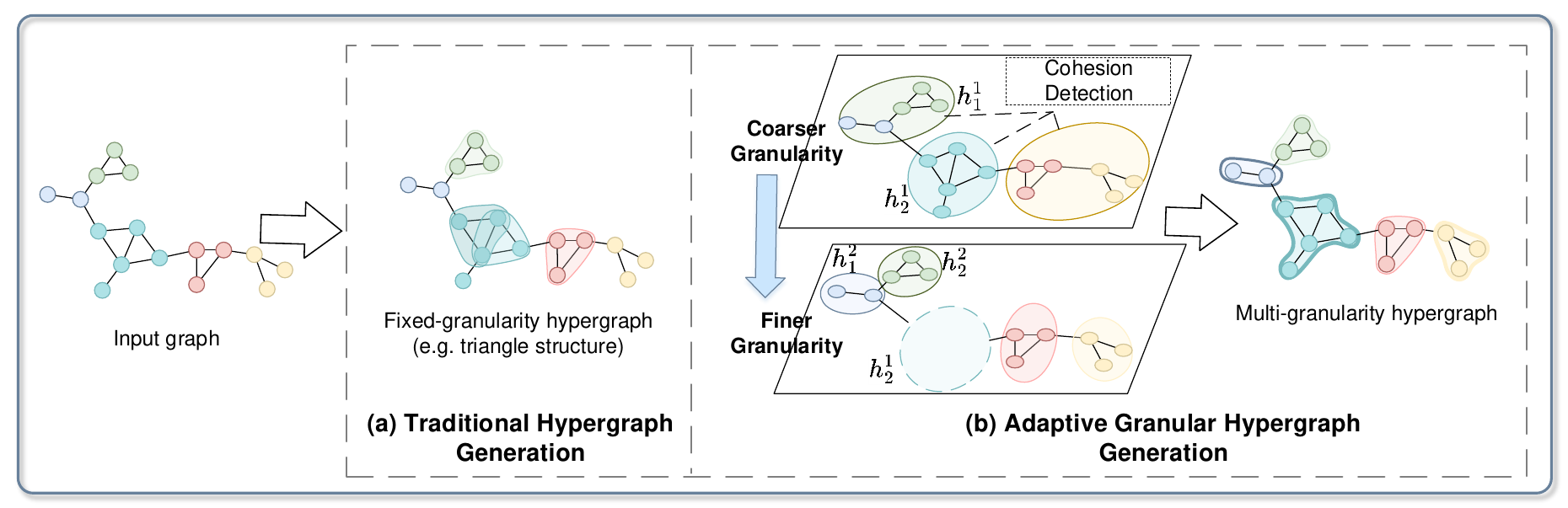}
  \caption{Comparison of traditional fixed-granularity and proposed adaptive multi-granularity hypergraph generation.(a) Traditional methods (e.g., triangle-based) impose fixed hyperedges, overlooking hierarchical structures.(b) Our method adaptively splits hyperedges by detecting node cohesion, refining from coarse to fine granularity to capture richer topologies.}
  \label{fig:motivation}
\end{figure}

In real-world hypergraph scenarios, hyperedges are formed by relationships among cohesive groups of nodes, which span multiple levels of granularity. Figure~\ref{fig:motivation} compares traditional fixed-granularity and our adaptive approach. Traditional methods (e.g., triangle-based) ignore hierarchical structure. In contrast, our method dynamically splits hyperedges by detecting node cohesion, refining from coarse to fine granularity to capture richer topologies. For example, $h_2^1$ reflects optimal granularity, while $h_1^1$ can be split into $h_1^2$ and $h_2^2$ for finer structural insights.

Modeling the multi-granularity nature of hyperedges is a non-trivial task, primarily due to two key challenges: (1) Hyperedge generation requires adaptivity, as cohesive node groups often have complex topologies not captured by fixed rules. Drawing inspiration from the multi-granularity distinction in Granular-Ball Computing~\cite{xia2019granular,xie2023efficient}, we propose generating hyperedges at multiple levels of granularity through the adaptive splitting of granular-ball, effectively capturing high-order relationships based on the graph’s topological structure. (2) Hypergraph modeling is often granularity-agnostic. Most existing methods primarily focus on information propagation between nodes and hyperedges, neglecting the distinct learning capabilities associated with hyperedges of different granularities. This oversight significantly constrains the model's capacity to effectively capture and represent multi-level relational patterns.

To address the aforementioned issues, we propose a novel  \underline{M}ulti-\underline{G}ranularity \underline{H}ypergraph \underline{R}epresentation \underline{L}earning (MGHRL) framework. MGHRL introduces an Adaptive Granular Hypergraph Generation strategy, which progressively constructs multi-granularity hypergraphs through adaptive splitting. This approach generates the hypergraph in a coarse-to-fine manner, ensuring that the generated hyperedges accurately reflect the relationships between nodes and the overall graph topology~\cite{liu2023multi,kang2022dynamic}. Additionally, we propose a novel Multi-Granularity Hypergraph Network, which consists of multiple sub-networks, referred to as columns. Each column learns independently from distinct graph replicas, with decoupled parameters. This enables each column to extract feature representations from hyperedges of varying granularities independently, while progressively accumulating and refining information through multi-level reversible connections between columns.

In a nutshell, this work makes the following contributions:
\begin{itemize}
    \item We stress the critical role of the multi-granularity nature of hyperedges in hypergraph representation learning.
    
    \item  We propose a novel MGHRL framework that generates hyperedges at multiple granularities through the adaptive splitting of granular balls. Additionally, we introduce the Multi-Granularity Hypergraph Network, which independently extracts feature representations from hyperedges of varying granularities and progressively refines information through multi-level reversible connections between these hyperedges.
    
    \item  We conducted extensive experiments on eight real-world datasets and MGHRL effectively improves the performance of hypergraph representation learning.
\end{itemize}

\section{Related Work}
Graph Neural Networks (GNNs) have greatly advanced the representation and analysis of graph-structured data~\cite{kipf2017semi,velivckovic2018graph,hamilton2017inductive}. Traditional Graph Convolutional Networks (GCNs) aggregate features from neighboring nodes through localized convolution, excelling in semi-supervised tasks~\cite{kipf2017semi}. Building on GCNs, Graph Attention Networks (GATs) introduce attention mechanisms to prioritize important neighbors during aggregation~\cite{velivckovic2018graph}, while GraphSAGE~\cite{hamilton2017inductive} enables inductive learning by sampling fixed-size neighborhoods for large-scale graphs.
Graph Isomorphism Networks (GINs)~\cite{DBLP:conf/iclr/XuHLJ19} use a robust aggregation function to distinguish between diverse graph structures. Despite effectiveness, traditional graphs are limited to capturing only pairwise relationships, which constrains their ability to model high-order interactions~\cite{kipf2017semi,hamilton2017inductive}.

To overcome the limitations of traditional graph structures, hypergraphs~\cite{ramadan2004hypergraph,zhou2006learning} have emerged as a more expressive graph structure for representing high-order relationships, where hyperedges connect multiple nodes simultaneously. Recent advancements in hypergraph neural networks (HNNs)~\cite{feng2019hypergraph,kang2022dynamic} have enhanced the representation of complex high-order relationships within hypergraphs, addressing the limitations of traditional GNNs in capturing connections that extend beyond simple pairwise relationships~\cite{ding2020more}. HyperGCN~\cite{yadati2019hypergcn} utilizes a clique expansion technique to convert real-world hypergraphs into simple graphs, enabling the use of standard graph convolution operations. However, the acquisition of hyperedge data faces technical challenges in many domains~\cite{madan2011sensing,li2010computational}, and such data is often not publicly available. 
HGNN~\cite{feng2019hypergraph} constructs hyperedges by connecting the k nearest nodes to a central node based on distance or similarity, and incorporates spectral convolutions tailored for hypergraph structures. It aggregates node features to generate hyperedge representations, which are then used to update adjacent node features, allowing the model to learn intricate relational patterns within the hypergraph.
UniGCN and UniGAT~\cite{ijcai21-UniGNN} represent a significant advancement by integrating graph and hypergraph neural networks into a unified framework. This integration enables a more comprehensive understanding of information propagation across both network structures, improving the model’s ability to transfer and analyze information across diverse data representations. Additionally, HGNN+~\cite{DBLP:journals/pami/GaoFJJ23} build hyperedges with k-Hop neighbors and introduce a versatile framework for high-order and multi-modal analysis, which captures latent correlations within each modality and effectively integrates them across modalities.
 ED-HNN~\cite{DBLP:conf/iclr/WangYLWL23} leverages star expansions and message passing neural networks to efficiently approximate continuous equivariant hypergraph diffusion operators, enabling the model to capture complex high-order relationships and improve performance on heterophilic hypergraphs.
 HyperNAS~\cite{DBLP:conf/aaai/LinP0J24} further introduces a hypergraph neural architecture search method that defines a specialized search space, uses a differentiable search algorithm, and optimizes the architecture with a hypergraph structure-aware criterion.  However, previous methods often rely on predefined criteria for generating hyperedges, ignoring the diversity of graph structures and the multi-granularity nature of hyperedges.

\section{Approach}
\label{Approach}

Figure \ref{fig: framework} presents the proposed framework, comprising the Adaptive Granular Hypergraph Generation (AGHG) strategy and the Multi-Granularity Hypergraph Network (MGHN). AGHG initializes coarse-grained granular balls and adaptively refines them via binary splitting based on node features and topology. MGHN employs a multi-column architecture with reversible connections to integrate information across granularities, enhancing the modeling of complex hypergraph structures.

\subsection{Adaptive Granular Hypergraph Generation}
\begin{figure*}[t]
  \centering
  \includegraphics[width=0.95\linewidth]{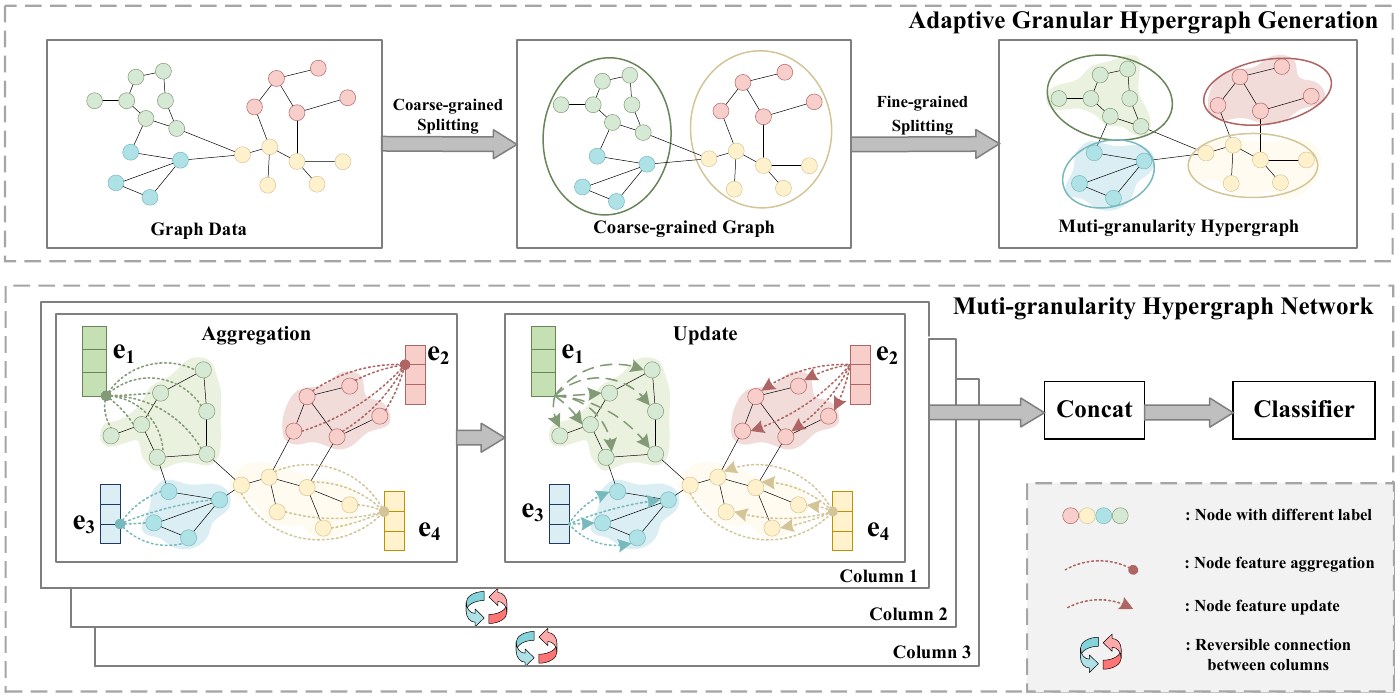}
  \caption{Overview of our proposed MGHRL framework. It mainly contains two modules: Adaptive granular hypergraph generation process, and Multi-granularity hypergraph network architecture.}
  \label{fig: framework}
\end{figure*}
This section provides a detailed explanation of AGHG. To formalize the approach, we introduce the Granular-ball method, which refines a graph into multiple subdomains.

\begin{definition}
\label{def.granular}
\textbf{Topological Granular-ball Decomposition.} Let $\mathcal{G} = (\mathcal{V}, \mathcal{E}, \mathbf{X}, \mathbf{A})$ denote a graph-based structured domain endowed with a metric $d : \mathcal{V} \times \mathcal{V} \rightarrow \mathbb{R}_{\geq 0}$, where $\mathbf{X}$ represents node features and $\mathbf{A}$ the adjacency matrix. This metric induces a topology $\tau_d$ on $\mathcal{V}$, in which basic neighborhoods are defined by closed metric balls.

A Granular-ball $\widetilde{\mathcal{G}}_i$ is defined as a closed ball in $(\mathcal{V}, d)$:
\begin{equation}
\widetilde{\mathcal{G}}_i=\left\{v \in \mathcal{V} \mid d\left(v, c_i\right) \leq r_{c_i}\right\},
\end{equation}
where $c_i \in \mathcal{V}$ is the center node and $r_{c_i}$ is a radius selected to preserve local connectivity and semantic cohesion.
The Granular-ball Decomposition of $\mathcal{G}$ is a finite collection of such subspaces:
\begin{equation}
    \widetilde{\mathbb{G}}=\left\{\left(\widetilde{\mathcal{V}}_i, \tilde{\mathcal{E}}_i\right)\right\}_{i=1}^n,
\end{equation}
where the family $\{ \widetilde{\mathcal{V}}_i \}_{i=1}^n$ form a topological covering of $\mathcal{V}$.

We aim to find a covering $\widetilde{\mathbb{G}}$ such that the subspace structure is both topologically coherent and semantically meaningful, formalized as the following optimization:
\begin{equation}
\min _{\widetilde{\mathbb{G}}} \sum_{i=1}^n \frac{1}{Q\left(\widetilde{\mathcal{G}}_i\right)}+\lambda \cdot n,
\end{equation}
where $Q(\widetilde{\mathcal{G}}_i)$ denotes the topological or structural quality of subgraph $\widetilde{\mathcal{G}}_i$.
%Given a graph \( \mathcal{G} = (\mathcal{V}, \mathcal{E}) \), where \(\mathcal{V}\) denotes the set of \(n\) nodes and \(\mathcal{E}\) denotes the set of \(m\) edges, the Granular-ball method refines \( \mathcal{G} \) into multiple subdomains as follows:
%\begin{equation}
%\mathcal{G} = \{\widetilde{\mathcal{G}}_1, \widetilde{\mathcal{G}}_2, \dots, \widetilde{\mathcal{G}}_n\}, \quad \text{s.t.} \quad \min \sum_{j=1}^{n} \left( \frac{1}{\text{quality}(\widetilde{\mathcal{G}}_j)} \right) + \lambda \cdot n
%\end{equation}
%where each granular ball \( \widetilde{\mathcal{G}}_i \) is defined as:
%\begin{equation}
%\widetilde{\mathcal{G}}_i = (\widetilde{\mathcal{V}}_i, \widetilde{\mathcal{E}}_i), \quad \text{where} \quad \widetilde{\mathcal{V}}_i \subseteq \mathcal{V} \quad \text{and} \quad \widetilde{\mathcal{E}}_i \subseteq \mathcal{E}.
%\end{equation}
\end{definition}

As shown in Figure~\ref{fig: framework}, the initial graph \( \mathcal{G} \) is regarded as a coarse-grained granular ball and iteratively decomposed through adaptive binary splitting. At each step $t$, a granular ball \( \widetilde{\mathcal{G}}^{t} \)  is partitioned into finer sub-balls  \( \{\widetilde{\mathcal{G}}^{t+1}\} \), guided by structural and feature similarity. This process yields a topological covering of \( \mathcal{G} \) that captures hierarchical correlations—from fine-grained local patterns to global structures—enabling the construction of expressive hyperedges.
%As illustrated in Figure \ref{fig: framework}, "Adaptive Granular Hypergraph Generation," the initial graph \( \mathcal{G} \) is initially treated as a single coarse-grained granular ball. It is then divided into coarse-grained granular balls, followed by further iterative binary splitting into finer granular balls based on adaptive conditions. Each granular ball comprises closely connected nodes and edges with similar features, effectively capturing high-order correlations and generating appropriate hyperedges. This iterative process uncovers structural relationships at various hierarchical levels, ranging from fine-grained local interactions to coarse-grained global structures. At step \( t \) of the splitting phase, the coarse-grained granular ball \( \widetilde{\mathcal{G}}^{t} \) is divided into finer granular balls \( \{\widetilde{\mathcal{G}}_1^{t+1}, \widetilde{\mathcal{G}}_2^{t+1}, \dots, \widetilde{\mathcal{G}}_n^{t+1}\} \).

\subsubsection{Coarse-grained granular-ball initialization}
The 0-th step in the splitting phase is initialization, where a fixed number of granular balls are generated for coarse data partitioning. Following previous work~\cite{yu2001upper}, the initial number is set to \( q = \sqrt{N} \). First, the \( q \) nodes with the highest degrees are selected as the initial centers of the coarse-grained granular balls, denoted by \( \mathcal{C} = \{c_0, c_1, \dots, c_q\} \), where \( \mathcal{C} \) represents the set of central nodes. Then, a breadth-first search (BFS) algorithm assigns the remaining nodes in the graph \( \mathcal{G} \) to the closest center node, as determined by the following formula:
\begin{equation}\label{Assign}
\text{Assign}(v) = \arg \min_{c_i \in C} \text{d}(v, c_i),
\end{equation}
where \( \text{d}(v, c_i) \) is the shortest path distance from node \( v \) to center node \( c_i \).

After assignment, the initial granular space $\widetilde{\mathbb{G}}_{init}$ is constructed, where each coarse-grained granular ball \( \widetilde{\mathcal{G}}_i \) is defined as:
\begin{equation}\label{Gi}
\widetilde{\mathcal{G}}_i = \{ v \in V \mid \text{Assign}(v) = c_i \} \cup \{ c_i \}.
\end{equation}

%This results in the initial granular-ball distribution:

%\begin{equation}
%\widetilde{\mathcal{G}}_{\text{init}} = \{\widetilde{\mathcal{G}}_1^0, \widetilde{\mathcal{G}}_2^0, \dots, \widetilde{\mathcal{G}}_n^0\}.
%\end{equation}

This method ensures a balance between granular-ball quality and algorithm efficiency, avoiding excessive computational complexity and ensuring scalability for large graphs. After initialization, each \( \widetilde{\mathcal{G}}_i^0 \) proceeds to the iterative binary splitting phase.

\subsubsection{Fine-grained binary splitting}
At step \( t \), a granular ball \( \widetilde{\mathcal{G}}_i^t \) is split into two finer granular balls, \( \widetilde{\mathcal{G}}_1^{t+1} \) and \( \widetilde{\mathcal{G}}_2^{t+1} \), for the next iteration. The splitting process begins by identifying two distinct nodes \( u, v \in \widetilde{\mathcal{G}}_i^t \) with the highest degrees to serve as the centers of the child granular balls. Specifically, all candidate node pairs $(x,y)$ within \( \widetilde{\mathcal{G}}_i^t \) such that $x\neq y$ are evaluated, and the pair with the highest degrees is selected according to:
%The splitting process begins by selecting the two nodes \( u, v \in \widetilde{\mathcal{G}}_i^t \) with the highest degrees to serve as centers of the child granular balls. Specifically, candidate nodes \( x \) and \( y \) within \( \widetilde{\mathcal{G}}_i^t \) are evaluated, and the nodes with the highest degrees are selected as:
\begin{equation}
u, v = \arg \max_{x, y \in \widetilde{\mathcal{G}}_i^t} (\text{deg}(x), \text{deg}(y)), \quad x \neq y.
\end{equation}
After selecting the center nodes, the remaining nodes in \( \widetilde{\mathcal{G}}_i^t \) are assigned to these centers, forming the child granular balls \( \widetilde{\mathcal{G}}_1^{t+1} \) and \( \widetilde{\mathcal{G}}_2^{t+1} \), based on equations (\ref{Assign}) and (\ref{Gi}).

To effectively evaluate the quality of granular balls, we focus on their connectivity, which reflects how tightly nodes are linked within each subgraph. A higher connectivity indicates stronger cohesion and richer high-order relationships, which are crucial for accurate hypergraph representation. Therefore, we define the connectivity evaluation as:
\begin{equation}
Q(\mathcal{G}) = \frac{2|\mathcal{E}|}{|\mathcal{V}|}.
\end{equation}
Finally, the decision to continue the splitting process is made based on the following adaptive criteria: 
\begin{equation}
Q(\widetilde{\mathcal{G}}_1^{t+1}) + Q(\widetilde{\mathcal{G}}_2^{t+1}) > Q(\widetilde{\mathcal{G}}_i^t),
\end{equation}
where the splitting process terminates successfully when the combined quality of the child granular balls surpasses that of the parent; if not, the process proceeds recursively, thereby maintaining both computational efficiency and accuracy.

After iterative splitting, each granular ball in \( \widetilde{\mathbb{G}}_{\text{init}} \) is adaptively and iteratively split into multiple finer-grained granular balls, resulting in \( \widetilde{\mathbb{G}}_{\text{final}} \). These granular balls capture the multi-granular, high-order relationships among nodes. From this, we generate multi-granularity hyperedges as shown below:
\begin{equation}
\hat{\mathcal{E}} = \{ \mathcal{V}_i \mid \mathcal{G}_i  \in \widetilde{\mathbb{G}}_{\text{final}} \},
\end{equation}
The hyperedge is defined by the granular balls generated during the process. Ultimately, a hypergraph \( \hat{\mathcal{G}} = (\hat{\mathcal{V}}, \hat{\mathcal{E}}) \) is obtained, containing multi-granularity hyperedges, with \( \hat{\mathcal{V}} = \mathcal{V} \).

\begin{proposition}
\label{prop.metricl}
(Proof in Appendix \ref{app.proof.metric}.) 
The distance function $d(v,c)$ defines a valid metric over the structured domain $(\mathcal{V},d)$, and the topology it induces supports the Granular-ball covering introduced in Definition \ref{def.granular}.
\end{proposition}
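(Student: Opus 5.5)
Since $d$ is the shortest-path distance used in the BFS assignment of equation (\ref{Assign}), the plan is to verify the metric axioms for $d$ directly. I will then show that its closed balls form a valid basis of neighbourhoods and that the granular balls produced by the initialization and binary splitting steps are (unions of) such balls covering $\mathcal{V}$.

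Throughout, I restrict attention to each connected component of $\mathcal{G}$, or allow $d$ to take the value $+\infty$ between components; this is an extended metric, and all arguments below go through unchanged. Each edge is given unit (or any positive) length.

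The metric axioms are checked in turn.
\begin{itemize}
\item \textbf{Non-negativity and identity.} $d(u,v)\ge 0$ since path lengths are non-negative. Moreover $d(u,v)=0$ iff the shortest path is empty, i.e.\ iff $u=v$, because every nonempty path has length at least $1$.
\item \textbf{Symmetry.} This follows because $\mathbf{A}$ is symmetric (the graph is undirected), so reversing a path preserves its length.
\item \textbf{Triangle inequality.} Concatenating a shortest $u$--$w$ path with a shortest $w$--$v$ path gives a $u$--$v$ walk of length $d(u,w)+d(w,v)$. A walk contains a path of no greater length, hence $d(u,v)\le d(u,w)+d(w,v)$.
\end{itemize}
This part is routine. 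The only care needed is the identity axiom, which requires positive edge lengths so that distinct nodes are never at distance $0$.

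Next I consider the induced topology $\tau_d$. Since $d$ takes integer values, the open ball $\{v: d(v,c)<1\}$ equals $\{c\}$, so $\tau_d$ is the discrete topology on the finite set $\mathcal{V}$. Closed balls $\{v: d(v,c)\le r\}$ are therefore both open and closed, so they are legitimate basic neighbourhoods as Definition \ref{def.granular} requires.

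For the covering claim, fix a center $c_i$ and take $r_{c_i}=\max_{v\in\widetilde{\mathcal{G}}_i} d(v,c_i)$. Then the set in equation (\ref{Gi}) is contained in the closed ball $B(c_i,r_{c_i})$. Every $v$ in the component of some center is assigned by $\text{Assign}$ to at least one center (ties broken arbitrarily), so the union of the $\widetilde{\mathcal{G}}_i$ is $\mathcal{V}$. Hence the closed balls $B(c_i,r_{c_i})$ also cover $\mathcal{V}$.

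A binary split replaces a ball's node set by two subsets whose union is that same set, and each subset is again contained in a closed ball around its new center. By induction on the splitting step $t$, $\widetilde{\mathbb{G}}_{\text{final}}$ remains a finite covering of $\mathcal{V}$ by sets contained in closed $d$-balls.

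The main obstacle is the mismatch between Definition \ref{def.granular}, which asks for granular balls that are \emph{equal} to closed balls, and the Voronoi-type cells produced by $\text{Assign}$, which are generally only \emph{subsets} of balls. My first approach is to interpret the covering statement as a covering by the balls $B(c_i,r_{c_i})$ themselves, each of which contains its cell, so covering is preserved.

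A second difficulty is that splitting restricts the BFS to the subgraph $\widetilde{\mathcal{G}}_i^t$. If $d$ there means the induced-subgraph distance, I would note that this is again a metric by the same argument, applied to the subgraph. Nodes unreachable from both new centers must then be handled, either by giving them infinite distance or by appending them to the nearer center under the global $d$, so that covering is not lost.

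Finally, as a sanity check I will note that finiteness of $\mathcal{V}$ guarantees the splitting process terminates. Each split either stops or strictly reduces ball size, so the collection $\widetilde{\mathbb{G}}_{\text{final}}$ is indeed a finite covering.
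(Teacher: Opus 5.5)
Your metric verification matches the paper's, which checks only non-negativity, symmetry and the triangle inequality. You also prove the identity axiom $d(u,v)=0 \iff u=v$, which the paper omits; without it $d$ is only a pseudometric.

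The routes diverge on the topological half. The paper simply asserts that closed balls are basic neighbourhoods of $\tau_d$ and that ``the family of such balls forms a finite topological covering of $\mathcal{V}$''. That covering is never derived from the construction; the proof of the companion proposition even says the collection ``is selected to satisfy'' it. You instead do two things:
\begin{itemize}
\item You note that integer-valued $d$ makes $\tau_d$ discrete, so closed balls are clopen and the neighbourhood claim is immediate.
\item You derive the covering from the algorithm itself. Each $\text{Assign}$-cell lies inside $B(c_i,r_{c_i})$ with $r_{c_i}$ the maximal distance in the cell, and coverage survives each binary split by induction on $t$.
\end{itemize}
This costs more work but gives an honest statement. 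The granular balls actually produced are only \emph{subsets} of closed balls, a mismatch with Definition~\ref{def.granular} that the paper never acknowledges. Your handling of induced-subgraph distances and unreachable nodes during splitting also fills holes the paper's argument does not notice.

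One step needs tightening. At initialization, ``every $v$ in the component of some center is assigned'' does not by itself give $\bigcup_i \widetilde{\mathcal{G}}_i=\mathcal{V}$. The $\sqrt{N}$ highest-degree centers need not meet every connected component; isolated nodes are an example. Under your extended-metric convention such nodes are still assigned by tie-breaking among all-$+\infty$ distances, so coverage holds. However, the enclosing ball then has $r_{c_i}=+\infty$ and is all of $\mathcal{V}$. Either state this explicitly, or apply the repair you already propose for the splitting step, for instance letting such nodes form their own balls $B(v,0)=\{v\}$.
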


\begin{proposition}
\label{prop.granularball}
(Proof in Appendix \ref{app.proof.granularball}.) 
The Granular-ball covering $\widetilde{\mathbb{G}}$ defined in Definition \ref{def.granular} forms a finite topological covering of $\mathcal{V}$ under the topology $\tau_d$, satisfying full coverage, local coherence, and neighborhood basis property.
\end{proposition}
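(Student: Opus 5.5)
We prove the three properties (full coverage, local coherence, neighborhood basis) separately. Each uses Proposition \ref{prop.metricl}, which makes $(\mathcal{V},d)$ a genuine metric space, and the fact that $\mathcal{V}$ is finite. Throughout, $d$ is the shortest-path distance of Equation (\ref{Assign}). We restrict attention to a single connected component, or adopt the convention $d=\infty$ across components, so that every node lies at finite distance from some center in its component; the initialization and splitting steps already operate componentwise through BFS.

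\textbf{Full coverage and finiteness.} Finiteness is immediate: $\widetilde{\mathbb{G}}$ is a family of subsets of the finite set $\mathcal{V}$, so it has at most $2^{|\mathcal{V}|}$ members. In practice it has at most $|\mathcal{V}|$ members, because each ball contains its own center and every split produces two distinct centers. For coverage we argue by induction on the splitting step $t$.
\begin{itemize}
\item \emph{Base case ($t=0$).} Each $v$ receives $\text{Assign}(v)=\arg\min_{c_i}d(v,c_i)$, with ties broken by a fixed rule. By Equation (\ref{Gi}), $v$ then lies in the corresponding $\widetilde{\mathcal{G}}_i$, so $\bigcup_i\widetilde{\mathcal{G}}_i=\mathcal{V}$.
\item \emph{Inductive step.} A binary split of $\widetilde{\mathcal{G}}_i^t$ reassigns every node of the parent to one of the two child centers $u,v$. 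The children therefore cover the parent, and the union over all balls is preserved at every step.
\end{itemize}
Since the procedure terminates (each split strictly shrinks ball sizes, and sizes are bounded below by one), $\widetilde{\mathbb{G}}_{\text{final}}$ still covers $\mathcal{V}$.

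\textbf{Ball form and local coherence.} Here we show that each final piece is a closed metric ball in the sense of Definition \ref{def.granular}, or at least is contained in one and contains its center. Set $r_{c_i}=\max_{w\in\widetilde{\mathcal{G}}_i}d(w,c_i)$. Then $\widetilde{\mathcal{G}}_i\subseteq B(c_i,r_{c_i})$, and the pair $(c_i,r_{c_i})$ gives the ball representation required by Definition \ref{def.granular}.

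Local coherence means connectivity of each piece within the induced subgraph. We obtain it from a Voronoi-type argument. Let $w$ be assigned to $c_i$, and take a shortest path from $w$ to $c_i$. Each intermediate node $w'$ on that path satisfies $d(w',c_i)=d(w,c_i)-d(w,w')$. By the triangle inequality, $d(w',c_j)\ge d(w,c_j)-d(w,w')\ge d(w,c_i)-d(w,w')$ for every other center $c_j$. So $w'$ is also nearest to $c_i$, and a consistent tie-breaking rule ensures $w'$ is actually assigned to $c_i$. Hence each piece contains a shortest path from any of its members to its center, and is therefore connected. The same argument applies inside a parent ball when it is split. For this step we use the distance of the subgraph induced by the parent, which is what BFS restricted to the ball computes.

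\textbf{Neighborhood basis.} Since $d$ is a metric on a finite set, its minimum nonzero value $\delta$ is positive. The ball $B(v,r)$ with $r<\delta$ equals $\{v\}$, so $\tau_d$ is the discrete topology. The closed balls $\{B(v,r)\}_{r\ge 0}$ form a neighborhood basis at every point: every open set containing $v$ contains $\{v\}=B(v,0)$. Every member of the cover is open in the discrete topology, so the cover is a finite open topological covering, and the granular balls are unions of basic neighborhoods.

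\textbf{Main obstacle.} The delicate step is local coherence, for two reasons. First, the argument needs a consistent tie-breaking rule. Second, the distance used in each split must be the induced-subgraph distance, not the global one, because with global distances a child ball could become disconnected. I would first try to fix a total order on centers and use it to break ties. If the paper's splitting uses global distances, I would weaken the claim to say that each ball is contained in a metric ball around its center, which is all Definition \ref{def.granular} strictly requires.
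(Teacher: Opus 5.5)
Your argument is sound in substance, but it takes a genuinely different route from the paper. The paper's proof stays at the level of Definition~\ref{def.granular}. There, each $\widetilde{\mathcal{G}}_i$ is a closed ball by definition, and coverage holds because $\widetilde{\mathbb{G}}$ ``is selected to satisfy'' $\bigcup_i \widetilde{\mathcal{G}}_i \supseteq \mathcal{V}$. Locality is argued informally, by saying closed balls ``approximate'' open neighborhoods in discrete spaces, and the basis property is simply asserted from the metric topology.

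You instead analyze the sets the AGHG algorithm actually produces:
\begin{itemize}
\item coverage, by induction over splits together with termination;
\item local coherence, made precise as induced-subgraph connectivity and proved by the shortest-path (Voronoi) argument with a fixed tie-breaking order and parent-induced distances;
\item the observation that $\tau_d$ is discrete, which makes every cover an open cover and turns the paper's ``approximation'' remark into an exact statement.
\end{itemize}
The paper's route is immediate but essentially restates the definition. Yours gives the proposition real content for the algorithm's output, at the cost of the hypotheses you flag: consistent tie-breaking, induced distances, and a center reachable in every component. The $d=\infty$ convention alone does not supply the last of these. The $\sqrt{N}$ highest-degree nodes may all lie in one component, and a component with no center then produces a disconnected cell, although coverage survives.

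There are two points to fix.

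First, your sentence claiming $(c_i,r_{c_i})$ ``gives the ball representation required by Definition~\ref{def.granular}'' is false. The definition requires $\widetilde{\mathcal{G}}_i = B(c_i,r_{c_i})$, not mere containment, and Voronoi cells generally fail this: any other center $c_j$ with $d(c_i,c_j)\le r_{c_i}$ lies in $B(c_i,r_{c_i})\setminus\widetilde{\mathcal{G}}_i$. So the AGHG output is not literally a covering in the sense of Definition~\ref{def.granular}, and you should state containment only. For genuine closed balls, coherence is even easier and needs no tie-breaking: every node on a shortest path from $w$ to $c_i$ lies within $d(w,c_i)\le r_{c_i}$ of $c_i$.

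Second, your basis property is weaker than the paper's. You prove that all closed balls form a neighborhood base and that the members of $\widetilde{\mathbb{G}}$ are open. The paper asserts that for every $v$ and every neighborhood $U$, some member of the finite family $\widetilde{\mathbb{G}}$ satisfies $v\in\widetilde{\mathcal{G}}_i\subseteq U$. Your discreteness observation shows that the paper's version, taken with $U=\{v\}$, would force $\{v\}\in\widetilde{\mathbb{G}}$ for every $v$, and the splitting procedure does not guarantee this. Your weaker formulation is therefore the defensible one, but you should say explicitly that it is the version you are proving.
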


\subsection{Multi-Granularity Hypergraph Network}
As illustrated in Figure \ref{fig: framework}, "Multi-Granularity Hypergraph Network,"the network architecture is composed of multiple sub-networks, or columns. Each column can be implemented using a traditional single-column architecture and consists of multiple blocks. The columns communicate and aggregate information via multi-level reversible connections. This design enables progressive feature decoupling during forward propagation, preserving total information content, unlike traditional networks that may compress or discard features. To retain global topology, a full copy of the hypergraph is fed into each column. The reversible connections between columns are defined as follows: assuming each column consists of \( m \) blocks, the forward and backward propagation for the \( t \)-th block are:
\begin{align}
  & \text{Forward: }  x_t = F_t(x_{t-1}, x_{t-m+1}) + \gamma x_{t-m}\label{eq:forward},\\
  & \text{Backward: }  x_{t-m} = \gamma^{-1} [x_t - F_t(x_{t-1}, x_{t-m+1})],
\end{align}
where \( F_t \) represents a non-linear operation similar to residual functions in ResNets, and \( \gamma \) is a reversible operation, such as channel-wise scaling, with \( \gamma^{-1} \) as its inverse. 

Compared to conventional architectures, MGHN provides three key advantages:  
\begin{itemize}
    \item \textbf{Feature Disentangling:} In MGHN, low-level features are typically retained in the lower columns, while high-level semantic features are gradually captured and extracted in the higher columns. These features are effectively fused through reversible connections.
    \item \textbf{Multi-Granularity Learning:} Each column in the MGHN learns from a distinct graph replica with decoupled parameters, allowing it to focus on different granularities of hyperedges. Reversible connections between columns help refine and accumulate information, integrating both coarse and fine details.
    \item \textbf{New Scaling Factor for Large Models:} Unlike traditional architectures that scale in depth and width, the multi-granularity hypergraph network introduces columns as an additional scaling dimension for GNNs. Adding more columns enhances the model's ability to capture both low-level and high-level features.
\end{itemize}

\paragraph{Hypergraph Attention Convolution Block.}  
Every column of the MGHN comprises four hypergraph attention convolution blocks. During the feature extraction process, the following two assumptions are made~\cite{BAI2021107637}:

\begin{itemize}
    \item \textbf{Stronger Propagation Among Nodes Connected by the Same Hyperedge:} Nodes linked by the same hyperedge effectively share information due to their similar features.
    \item \textbf{Greater Trust in Heavier Hyperedges During Propagation:} Heavier hyperedges represent stronger connections, making them more reliable for effective information exchange.
\end{itemize}

The update step for hypergraph convolution is defined as follows, based on the outlined assumptions:

Given a hypergraph \( \hat{\mathcal{G}} = (\hat{\mathcal{V}}, \hat{\mathcal{E}}) \), where \( \hat{\mathcal{V}} \) denotes the set of nodes and \( \hat{\mathcal{E}} \) denotes the set of hyperedges:

For each node \( \hat{v}_i \) and its associated hyperedges \( \hat{e}_j \in \hat{\mathcal{E}} \) (i.e., the set of hyperedges connected to node \( \hat{v}_i \)), the attention score is calculated as:
\begin{equation}
\alpha_{\hat{v}_i\hat{e}_j} = \sigma \left( a^T [W h_{\hat{v}_i} ; W h_{\hat{e}_j}] \right),
\end{equation}
where \(\sigma\) denotes a nonlinear activation function. Here, \(a \in \mathbb{R}^{2d'}\) is a learnable attention parameter vector, and \(W\) is a learnable weight matrix used to transform node and hyperedge features. The vectors \(h_{\hat{v}_i}\) and \(h_{\hat{e}_j}\) represent the embeddings of node \(\hat{v}_i\) and hyperedge \(\hat{e}_j\), respectively, with the latter aggregated from its connected nodes.
% where \( \alpha_{\hat{v}_i\hat{e}_j} \) represents the attention score between node \( \hat{v}_i \) and hyperedge \( \hat{e}_j \), \( \sigma \) is a nonlinear activation function, \( a \in \mathbb{R}^{2d'} \) is a learnable attention parameter vector, \( W \) is a learnable weight matrix for transforming node and hyperedge features, \( h_{\hat{v}_i} \) is the embedding vector of node \( \hat{v}_i \), and \( h_{\hat{e}_j} \) is the embedding vector of hyperedge \( \hat{e}_j \), aggregated from its connected nodes connected to hyperedge \( \hat{e}_j \).

The attention scores for all hyperedges adjacent to node \( \hat{v}_i \) are normalized via softmax:
\begin{equation}
\tilde{\alpha}_{\hat{v}_i\hat{e}_j} = \frac{\exp(\alpha_{\hat{v}_i\hat{e}_j})}{\sum_{\hat{e}_j' \in \hat{\mathcal{E}}} \exp(\alpha_{\hat{v}_i\hat{e}_j'})},
\end{equation}
where \( \tilde{\alpha}_{\hat{v}_i\hat{e}_j} \) is the normalized attention score between node \( \hat{v}_i \) and hyperedge \( \hat{e}_j \), and \( \exp(\alpha_{\hat{v}_i\hat{e}_j}) \) denotes the exponential of the raw attention score.

Node  \( \hat{v}_i \)'s embedding is updated by aggregating a weighted sum of hyperedge embeddings:
\begin{equation}
\tilde{x}_i = \sum_{\hat{e}_j' \in \hat{\mathcal{E}}} \tilde{\alpha}_{\hat{v}_i\hat{e}_j} W h_{\hat{e}_j}.
\end{equation}

\paragraph{Fusion Module.} The non-linear residual module is a key fusion component in the MGHN. It integrates feature representations from both the current and previous columns using residual connections, thereby facilitating the fusion of multi-granularity features. As in the case of the \( t \)-th block in formula~\ref{eq:forward}, this fusion module is expressed as follows:
\begin{align}
    F(x_{t-1}, x_{t-m+1}) = \sigma(\text{LayerNorm}(W_1 x_{t-1} + b_1)) + \beta x_{t-m+1},
\end{align}
where \( x_{t-m+1} \) represents the feature vector from the previous column, enabling cross-layer information integration. \( \beta \) is a learnable scaling parameter that adaptively adjusts the contributions of features from both the current and previous columns. Controls the intensity of feature fusion, thereby enhancing the propagation of information across layers.

\paragraph{Reversible Operation.} The reversible scaling module improves training stability by applying channel-level scaling. It employs a learnable parameter, initially set to 1, which adapts during training. The module combines the current features with scaled previous features to enable efficient information transfer while suppressing excessive magnitudes, thus maintaining stability. To prevent numerical errors during backpropagation, the scaling parameter is constrained to a minimum absolute value of \(1 \times 10^{-3}\), thereby ensuring stable gradient flow and effective training of complex architectures.

\section{Experiment}
\label{Experiment}
\subsection{Datasets}
\label{Datasets}
This experiment utilizes seven public benchmark datasets: three citation networks (Cora~\cite{Sen_Namata_Bilgic_Getoor_Galligher_Eliassi-Rad_2017}, Citeseer~\cite{Getoor_2012}, PubMed~\cite{Getoor_2012}) and four social media networks (Flickr~\cite{10.1007/978-3-642-33765-9_59}, BlogCatalog~\cite{10.5555/2888116.2888372}, Facebook~\cite{Rozemberczki_Allen_Sarkar_2019}, GitHub~\cite{Rozemberczki_Allen_Sarkar_2019}). In citation networks, nodes represent academic papers with sparse bag-of-words features, and edges denote citation relationships, with the objective of predicting paper topics. In social media networks, nodes represent developers or websites, with features derived from user information or page themes, and the task is to predict the fields of developers or the categories of pages. The statistical characteristics of these datasets are presented in Table~\ref{tab: dataset}. In our experiments, each dataset is split into training, validation, and test sets in a 6:2:2 ratio.
\begin{table}[h]
    \caption{Dataset Information}
    \label{tab: dataset}
    \centering
    \begin{tabular}{lrrrr}
        \toprule
        Dataset & Classes & Nodes & Edges & Features \\
        \midrule
        Cora~\cite{Sen_Namata_Bilgic_Getoor_Galligher_Eliassi-Rad_2017}        & 7  & 2708   & 10858   & 1433 \\
        Pubmed~\cite{Getoor_2012}      & 3  & 19717  & 88676   & 500  \\
        Citeseer~\cite{Getoor_2012}    & 6  & 3327   & 9464    & 3703 \\
        Facebook~\cite{Rozemberczki_Allen_Sarkar_2019}    & 4  & 22470  & 85501   & 4714 \\
        BlogCatalog~\cite{10.5555/2888116.2888372} & 6  & 5196   & 343486  & 8189    \\
        Flickr~\cite{10.1007/978-3-642-33765-9_59}      & 9  & 7575   & 479476  & 12047    \\
        Github~\cite{Rozemberczki_Allen_Sarkar_2019}      & 4  & 37700  & 144501  & 4005 \\
        \bottomrule
    \end{tabular}
\end{table}
\subsection{Baselines}
We compare the proposed MGHRL with two categories of baseline methods: graph neural networks (GNNs) and hypergraph neural networks (HNNs). The graph neural network methods include GCN \cite{kipf2017semi}, GAT \cite{velivckovic2018graph}, GraphSAGE \cite{hamilton2017inductive}, and GDGIN \cite{kong2022geodesic}. The hypergraph neural network methods include HyperGCN \cite{yadati2019hypergcn}, UniGCN \cite{huang2021unignn}, UniGAT \cite{huang2021unignn}, UniSAGE \cite{huang2021unignn}, HNHN \cite{dong2020hnhn}, HGNN \cite{feng2019hypergraph}, HGNN+ \cite{DBLP:journals/pami/GaoFJJ23}, CHNN \cite{CHNN}, ED-HNN \cite{DBLP:conf/iclr/WangYLWL23}, and HyperNAS \cite{DBLP:conf/aaai/LinP0J24}. Node classification accuracy is used as the evaluation metric to assess the performance of these methods.

\subsection{Experimental Setting}
\label{Experimental Setting}
Experiments are conducted on an Intel Xeon Gold 5218 CPU (2.30 GHz) with four Tesla V100 GPUs, using Python 3.10.9, PyTorch 1.13, and CUDA 11.6. 
The model is configured with four columns, a hidden dimension of 16, and eight attention heads per block in the multi-head attention mechanism. ReLU is used as the activation function. During training, the Adam optimizer minimizes the cross-entropy loss with a learning rate of 0.0001 and a regularization parameter of \( 5 \times 10^{-4} \).

\begin{table*}[th]
    \caption{Test accuracy (\%) on the benchmark datasets. Bolded results denote statistically significant improvements of our model over the baselines, with a p-value less than 0.01.}
    \label{tab:acc}
    \centering
    \begin{tabular}{lc@{\hspace{0.7em}}c@{\hspace{0.5em}}c@{\hspace{0.5em}}c@{\hspace{0.5em}}c@{\hspace{0.5em}}c@{\hspace{0.7em}}c@{\hspace{0.7em}}c}
        \toprule
        Models      & Cora   & PubMed  & Citeseer  & Facebook & BlogCatalog & Flickr & Github & AVG   \\
        \midrule
        GCN         & 82.32  & 84.26   & 69.66     & 92.68    & 79.05       & 56.96  & 84.57  & 78.50 \\
        GAT         & 84.26  & 84.66   & 70.97     & 92.09    & 80.21       & 60.46  & 85.04  & 79.67 \\
        GraphSAGE   & 84.08  & 84.29   & 70.72     & 92.61    & 80.82       & 69.30  & 84.07  & 80.84 \\
        GDGIN       & 81.21  & 82.37   & 71.62     & 91.36    & 75.86       & 73.53  & 85.04  & 80.14 \\
        HyperGCN    & 85.13  & 84.95   & 74.21     & 93.15    & 75.28       & 60.46  & 86.22  & 79.91 \\
        HNHN        & 85.95  & 85.01   & 72.81     & 93.24    & 74.23       & 62.64  & 85.68  & 79.93 \\
        UniGCN      & 85.09  & 84.11   & 74.51     & 93.18    & 87.11       & 61.65  & 85.66  & 81.62 \\
        UniGAT      & 85.26  & 84.74   & 74.12     & 93.54    & 87.53       & 77.88  & 86.10  & 84.17 \\
        UniSAGE     & 83.79  & 84.30   & 74.47     & 92.10    & 87.84       & 74.41  & 80.05  & 82.42 \\
        HGNN        & 85.43  & 85.12   & 72.42     & 93.56    & 88.23       & 79.10  & 86.10  & 84.28 \\
        HGNN+       & 85.63  & 85.32   & 74.72     & 93.68    & 88.17       & 79.14  & 86.58  & 84.75 \\
        CHNN        & 86.03  & 85.79   & 74.97     & 93.81    & 92.33       & 82.33  & 86.57  & 85.98 \\
        HyperNAS    & 83.90  & 81.30   & 74.10     & -        & -           & -      & -      & -     \\
        ED-HNN      & 86.58  & 86.23   & 75.32     & 93.88    & 88.71       & 83.87  & 86.68  & 85.90 \\
        \midrule
        \textbf{MGHRL(ours)} & \textbf{87.47} & \textbf{86.40} & \textbf{75.97} & \textbf{93.90} & \textbf{92.88} & \textbf{84.29} & \textbf{86.96} & \textbf{86.84} \\
        \bottomrule
    \end{tabular}
\end{table*}
\subsection{Comparison Experiments}
% Based on the results in Table \ref{tab:acc}, the following observations can be made: 
% Experimental results demonstrate that MGHRL consistently outperforms all baseline models across seven benchmark datasets. Notably, MGHRL achieves substantial improvements on the Facebook (93.90\%) and BlogCatalog (92.88\%) datasets. These results indicate that MGHRL is particularly effective at capturing high-order relationships, especially in complex social network datasets. Although HNNs (e.g., HGNN and HGNN+) generally outperform traditional models such as GCN and GAT, they still fall short of MGHRL. For example, HGNN+ achieves an average accuracy of 85.02\%, while MGHRL attains 86.84\%. This performance gap underscores the importance of multi-granularity hypergraph construction and network architecture in effectively learning from complex graph data. MGHRL captures more informative features at multiple granularities, leading to a notable improvement in performance.

% These results demonstrate that MGHRL consistently outperforms existing models across various graph datasets, especially excelling in learning complex, multi-level relationships within graphs.
Table \ref{tab:acc} compares the node classification accuracy of all evaluated models on seven benchmark datasets. The results reveal two key findings: (1) \textbf{Superior performance of MGHRL:} The proposed framework achieves the highest accuracy across all datasets, with significant improvements on social networks(up to 4.17\%). This improvement is attributed to MGHRL’s adaptive granular-ball hyperedge generation, which dynamically builds multi-granularity hypergraphs to model high-order relationships. (2) \textbf{Superiority of multi-granularity hypergraph:} While HNNs (e.g., HGNN, HGNN+) outperform standard GNNs, their reliance on predefined hyperedges restricts flexibility. In contrast, MGHRL dynamically refines graph structures via granular-ball splitting and integrates multi-scale features through a multi-column network, enhancing representation learning. These results demonstrate the efficacy of multi-granularity hypergraph modeling in real-world graph analysis.

\subsection{Ablation Studies}

To investigate the mechanisms of MGHRL, we conducted ablation studies on the Cora, PubMed, and Citeseer datasets using two variants: MGHRL - w/o \textsl{AGHG} replaces the AGHG component with a k-Hop strategy, while MGHRL - w/o \textsl{MGHN} utilizes HGNN in place of the MGHN structure. The results presented in Table \ref{tab: Ablation Studies} lead to the following observations:
\begin{itemize}
    \item \textbf{MGHRL - w/o \textsl{AGHG}} shows a significant performance drop across all datasets, highlighting the importance of the multi-granular framework in capturing intricate features and enhancing generalization.
    
    \item \textbf{MGHRL - w/o \textsl{MGHN}} results in a more substantial performance decline, particularly on the Citeseer dataset, indicating the crucial role of MGHN in capturing complex node-hyperedge relationships and distinguishing between diverse structural patterns.
\end{itemize}

\begin{table}[h]
  \caption{Results of the Ablation Study.}
  \label{tab: Ablation Studies}
  \centering
  \begin{tabular}{lrrr}
    \toprule
    Models & Cora & PubMed & Citeseer \\
    \midrule
    ours & 87.47 & 86.40 & 75.97   \\
    -w/o AGHG & 86.93 & 86.07 & 74.87 \\
    -w/o MGHN & 86.88 & 86.15 & 74.52  \\
    \bottomrule
  \end{tabular}
\end{table}
\begin{figure*}[t]
    \centering
    \begin{subfigure}[b]{0.3\textwidth}
        \centering
        \includegraphics[width=\linewidth]{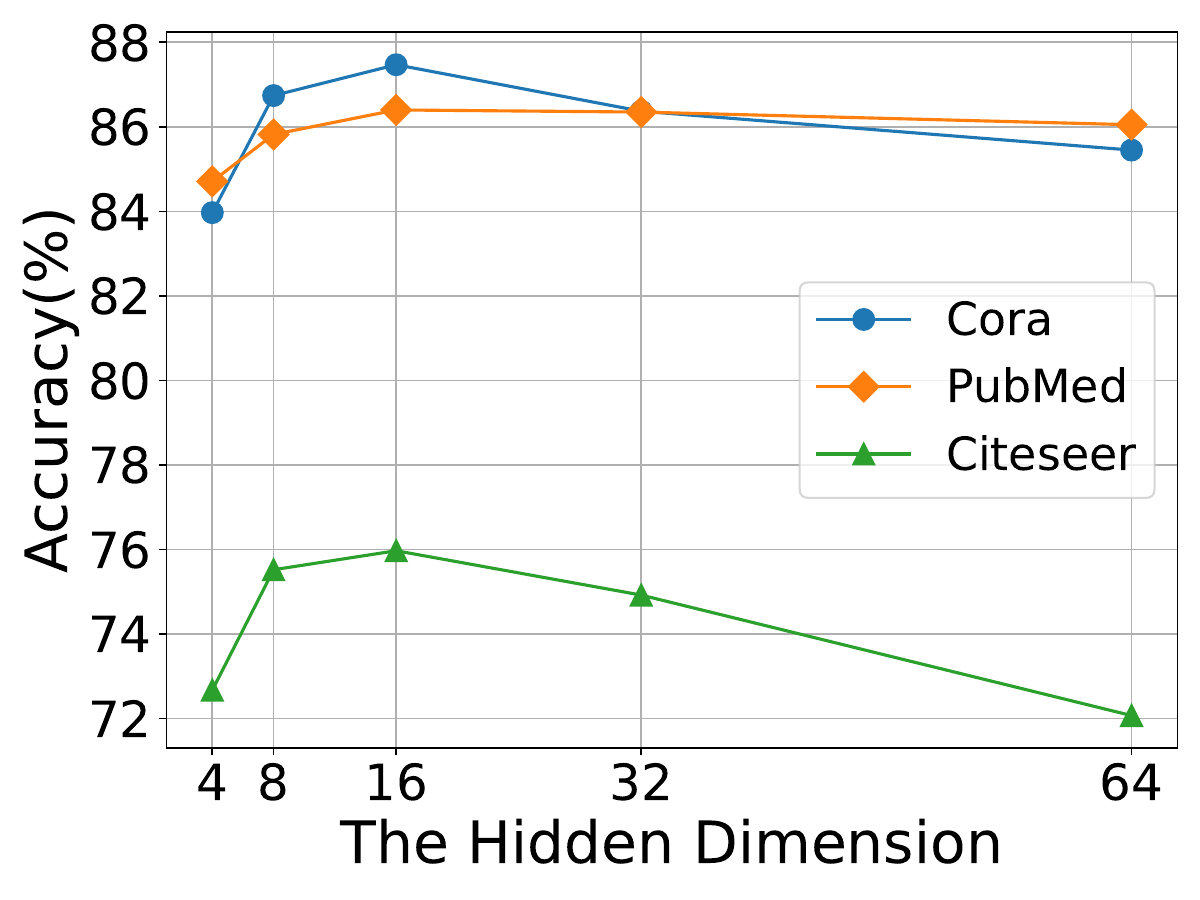}
        \caption{Hidden dimension impact.}
        \label{fig:hidden_dim}
    \end{subfigure}
    \hfill
    \begin{subfigure}[b]{0.3\textwidth}
        \centering
        \includegraphics[width=\linewidth]{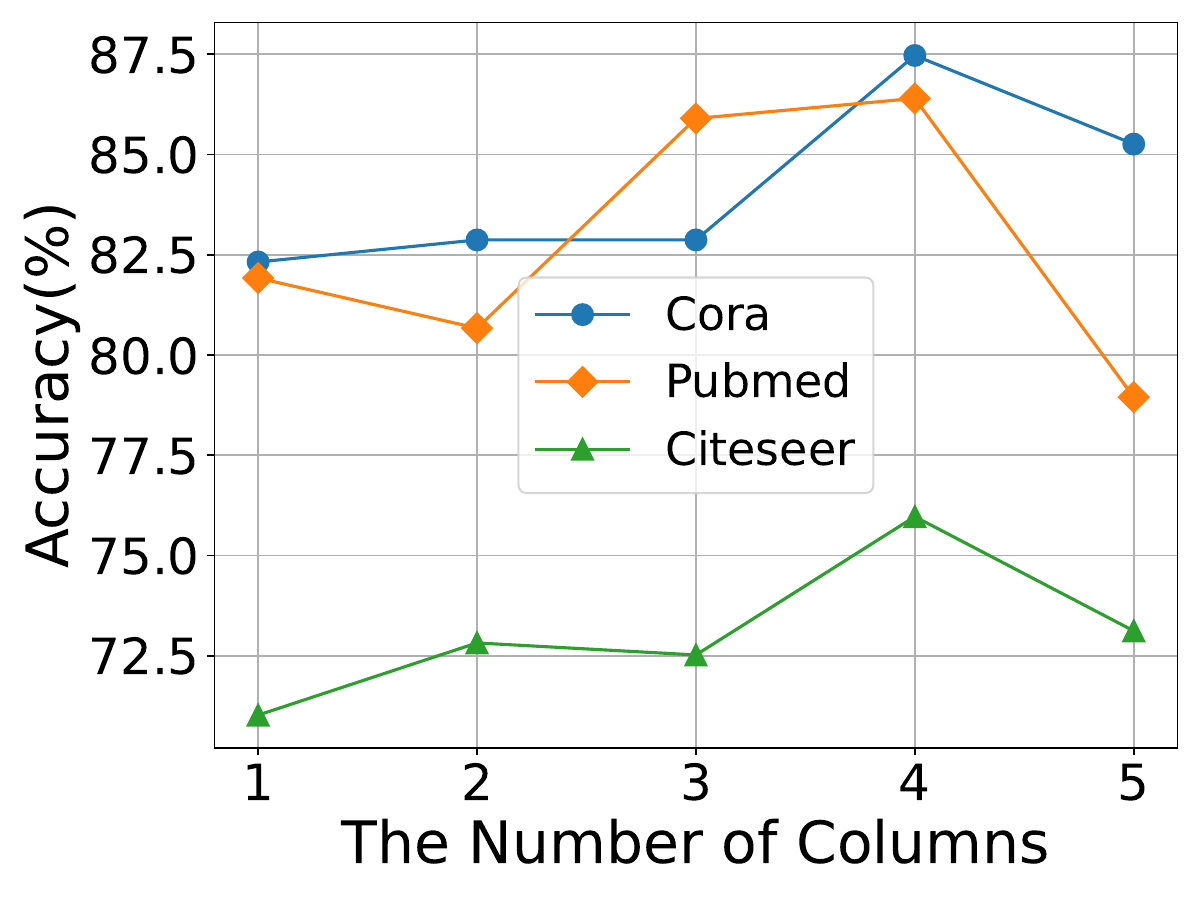}
        \caption{Column number impact.}
        \label{fig:num_columns}
    \end{subfigure}
    \hfill
    \begin{subfigure}[b]{0.3\textwidth}
        \centering
        \includegraphics[width=\linewidth]{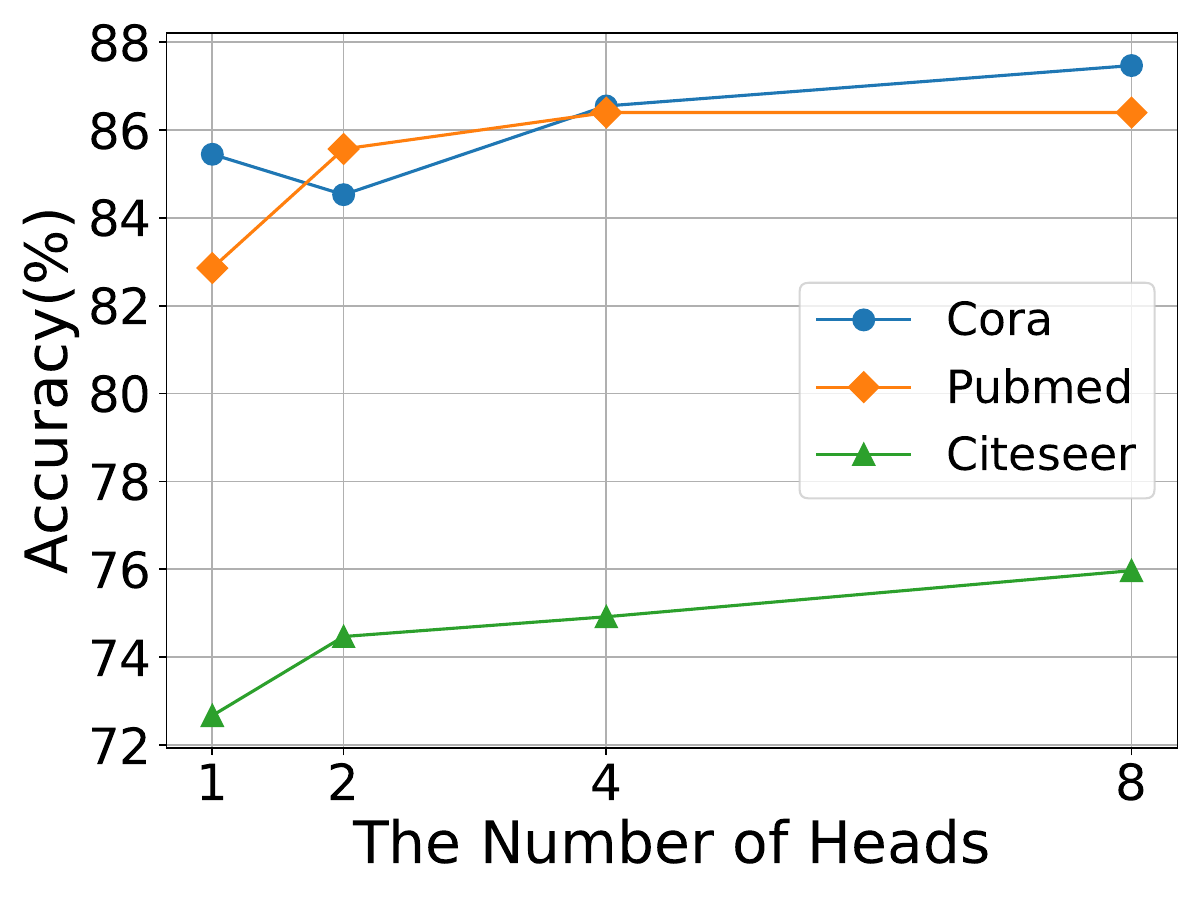}
        \caption{Head number impact.}
        \label{fig:num_heads}
    \end{subfigure}
    \caption{Impact of different parameters on model performance.}
    \label{fig:parameter_impact}
\end{figure*}

\begin{figure*}[t]
    \centering
    \begin{subfigure}[b]{0.3\textwidth}
        \centering
        \includegraphics[width=\linewidth]{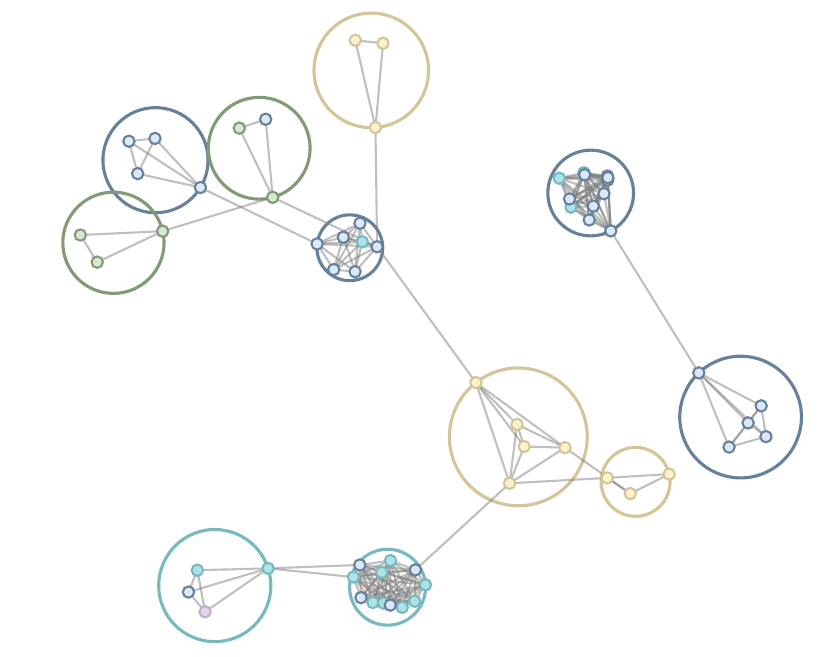}
        \caption{Cora}
        \label{fig:cora}
    \end{subfigure}
    \hfill
    \begin{subfigure}[b]{0.3\textwidth}
        \centering
        \includegraphics[width=\linewidth]{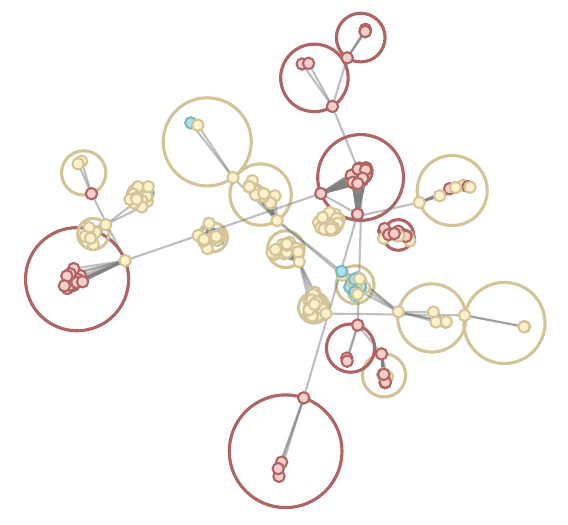}
        \caption{Pubmed}
        \label{fig:pubmed}
    \end{subfigure}
    \hfill
    \begin{subfigure}[b]{0.3\textwidth}
        \centering
        \includegraphics[width=\linewidth]{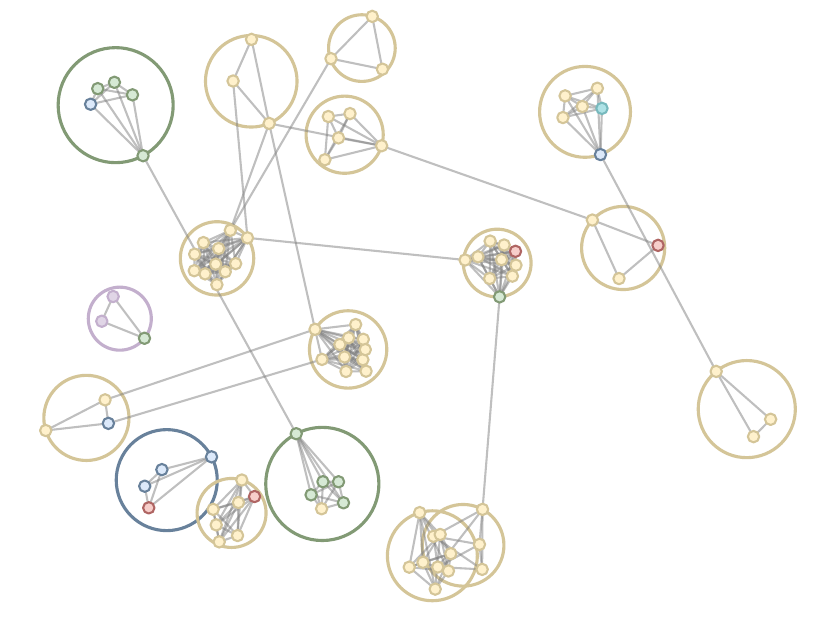}
        \caption{Citeseer}
        \label{fig:citeseer}
    \end{subfigure}
    \caption{Visualization of MGHRL performance across different datasets.}
    \label{fig:dataset_visualization}
\end{figure*}
\subsection{Hyper-parameter Sensitivity Analysis}

The adaptive identification of hyperedges with varying scales and types in the MGHRL framework enhances each hyperedge's unique characteristics, significantly improving the model's ability to capture complex relationships and structural patterns. This section examines the impact of three key factors—hidden dimension \(d\), number of columns \(c\), and number of attention heads \(h\)—on the performance of MGHRL, as illustrated in Figure~\ref{fig:parameter_impact}.

\textbf{Hidden Dimension \(d\):} Experiments show that as \(d\) increases from \{4, 8, 16, 32, 64\}, model performance improves, peaking at \(d = 16\), beyond which it slightly declines. This suggests that larger dimensions help capture patterns but may introduce noise or lead to overfitting.

\textbf{Number of Columns \(c\):} Performance improves with more columns, indicating that multi-column structures are better at capturing high-order relationships across granularities. However, when \(c = 5\), performance decreases, likely due to the over-smoothing of node features.

\textbf{Number of Attention Heads \(h\):} Performance consistently improves as the number of attention heads increases, highlighting the attention mechanism's ability to capture multi-granularity information, which enhances the learning of complex relationships.

\subsection{Case Study}
This section investigates the influence of hyperedges generated by the MGHRL method on three benchmark datasets (Cora, PubMed, and CiteSeer), focusing on their ability to model high-order relationships among nodes. As illustrated in Figure ~\ref{fig:dataset_visualization}, the visualization results demonstrate these effects through a representative subset of nodes, where distinct colors denote different node categories and circular markers highlight the hyperedges constructed by AGHG.
In these datasets, nodes represent papers and edges denote citations. Their complex structures involve high-order relationships beyond direct links, which traditional models often overlook.

AGHG generates multi-granularity hyperedges through adaptive granular-ball splitting, capturing both local and global topological features while revealing hidden high-order connections. In Cora, these hyperedges reveal intra-category clusters without direct citations, while in PubMed and CiteSeer, they expose cross-domain and interdisciplinary patterns.
This study demonstrates that the hyperedges generated by MGHRL effectively capture both local and global structures, providing deeper insights into complex relationships within citation networks.

\section{Conclusion}

In conclusion, the MGHRL method significantly enhances the modeling of complex relationships in graph-structured data. By leveraging adaptive granular-ball splitting and multi-column network structures, MGHRL effectively captures the intricate characteristics of hyperedges across multiple levels. This innovative approach enables a nuanced understanding of both local and global structures, resulting in superior performance in node classification tasks. MGHRL consistently outperforms state-of-the-art techniques across various benchmark datasets, demonstrating its robustness and effectiveness. Its exceptional adaptability positions MGHRL as a powerful and reliable tool for analyzing complex graph data, offering deeper insights into thematic and interdisciplinary connections that yield valuable, actionable outcomes across diverse domains.

\begin{ack}
Use unnumbered first level headings for the acknowledgments. All acknowledgments
go at the end of the paper before the list of references. Moreover, you are required to declare
funding (financial activities supporting the submitted work) and competing interests (related financial activities outside the submitted work).
More information about this disclosure can be found at: \url{https://neurips.cc/Conferences/2025/PaperInformation/FundingDisclosure}.

Do {\bf not} include this section in the anonymized submission, only in the final paper. You can use the \texttt{ack} environment provided in the style file to automatically hide this section in the anonymized submission.
\end{ack}

% \section*{References}
\bibliographystyle{unsrt}
\bibliography{reference}

% References follow the acknowledgments in the camera-ready paper. Use unnumbered first-level heading for
% the references. Any choice of citation style is acceptable as long as you are
% consistent. It is permissible to reduce the font size to \verb+small+ (9 point)
% when listing the references.
% Note that the Reference section does not count towards the page limit.
% \medskip

% {
% \small

% [1] Alexander, J.A.\ \& Mozer, M.C.\ (1995) Template-based algorithms for
% connectionist rule extraction. In G.\ Tesauro, D.S.\ Touretzky and T.K.\ Leen
% (eds.), {\it Advances in Neural Information Processing Systems 7},
% pp.\ 609--616. Cambridge, MA: MIT Press.

% [2] Bower, J.M.\ \& Beeman, D.\ (1995) {\it The Book of GENESIS: Exploring
%   Realistic Neural Models with the GEneral NEural SImulation System.}  New York:
% TELOS/Springer--Verlag.

% [3] Hasselmo, M.E., Schnell, E.\ \& Barkai, E.\ (1995) Dynamics of learning and
% recall at excitatory recurrent synapses and cholinergic modulation in rat
% hippocampal region CA3. {\it Journal of Neuroscience} {\bf 15}(7):5249-5262.
% }

%%%%%%%%%%%%%%%%%%%%%%%%%%%%%%%%%%%%%%%%%%%%%%%%%%%%%%%%%%%%

\clearpage
\appendix

\section{Proofs}
\label{Proofs}
\subsection{Proofs regarding Proposition \ref{prop.metricl}}
\label{app.proof.metric}
\begin{proof}
We verify the standard conditions required for $d(\cdot,\cdot)$ to define a valid metric on the graph-based domain $(\mathcal{V}, d)$:

\begin{itemize}
    \item \textbf{Non-negativity}: The shortest-path distance $\text{dist}(v,c)$ is always non-negative, i.e., $\text{dist}(v,c) \geq 0$. Therefore, $d(v,c) \geq 0$ holds.

    \item \textbf{Symmetry}: The shortest-path distance is symmetric with respect to node pairs: $\text{dist}(v,c) = \text{dist}(c,v)$.

    \item \textbf{Triangle inequality}: The shortest-path distance preserves the triangle inequality: for any $v, u, w \in \mathcal{V}$,
    \[
    d(v,w) \leq d(v,u) + d(u,w).
    \]
\end{itemize}

Hence, $d(\cdot,\cdot)$ satisfies the axioms of a metric and induces a valid metric topology $\tau_d$ over $\mathcal{V}$. The closed metric balls 
\[
\widetilde{\mathcal{G}}_i = \left\{ v \in \mathcal{V} \mid d(v, c_i) \leq r_{c_i} \right\}
\]
serve as basic neighborhoods in $\tau_d$. Consequently, the family of such balls forms a finite topological covering of $\mathcal{V}$, which enables localized, scalable, and topology-aware decomposition under the Granular-ball Computing framework.
\end{proof}

\subsection{Proofs regarding Proposition \ref{prop.granularball}}
\label{app.proof.granularball}
\begin{proof}
The proof is structured around verifying the three essential properties within the metric topology $(\mathcal{V}, \tau_d)$ induced by $d$:

\begin{itemize}
    \item \textbf{Coverage:} By construction, each Granular-ball $\widetilde{\mathcal{G}}_i$ is defined as a closed metric ball $\{v \in \mathcal{V} \mid d(v, c_i) \leq r_{c_i}\}$ centered at node $c_i \in \mathcal{V}$ with radius $r_{c_i} > 0$. The collection $\widetilde{\mathbb{G}}$ is selected to satisfy the topological covering condition:
    \begin{equation}
        \bigcup_{i=1}^n \widetilde{\mathcal{G}}_i \supseteq \mathcal{V},
    \end{equation}
    ensuring that every node $v \in \mathcal{V}$ is contained in at least one Granular-ball. This guarantees full domain coverage under the topology $\tau_d$.
    
    \item \textbf{Locality:} Since $d$ is a valid metric over $\mathcal{V}$, each node $v \in \mathcal{V}$ possesses a neighborhood basis consisting of open metric balls. Although $\widetilde{\mathcal{G}}_i$ is defined as a closed ball, it still contains all nodes within the radius $r_{c_i}$ from $c_i$, and in discrete spaces like graphs, closed balls serve as effective approximations to open neighborhoods. Consequently, each $\widetilde{\mathcal{G}}_i$ preserves local structural coherence and supports locality-aware computations on $\mathcal{G}$.

    \item \textbf{Basis Neighborhood:} In metric-induced topologies, closed balls form a base of closed neighborhoods. The finite collection $\widetilde{\mathbb{G}}$ with tunable centers and radii satisfies the neighborhood basis property: for any node $v \in \mathcal{V}$ and any neighborhood $U$ of $v$ in $\tau_d$, there exists at least one Granular-ball $\widetilde{\mathcal{G}}_i$ such that 
    \begin{equation}
        v \in \widetilde{\mathcal{G}}_i \subseteq U.
    \end{equation}
    This ensures that $\widetilde{\mathbb{G}}$ forms a legitimate basis of localized subspaces aligned with the topology.
\end{itemize}

Together, these three properties verify that the Granular-ball covering $\widetilde{\mathbb{G}}$ respects the topological structure induced by $d$, while enabling scalable, localized, and semantically meaningful decomposition of the graph domain. This forms the theoretical foundation for the Topological Granular-ball Decomposition framework and its application to graph representation and processing.

\end{proof}
\section{Time Complexity Analysis of AGHG Algorithm}
The total time complexity of AGHG is expressed as 
\[
\mathcal{O}(|\mathcal{V}|^{3/2} + |\mathcal{E}| \sqrt{|\mathcal{V}|}),
\]
where \( |\mathcal{V}| \) denotes the number of nodes and \( |\mathcal{E}| \) represents the number of edges in the input graph.

This complexity arises from the algorithm’s two-phase design. In the \textbf{coarse-grained initialization phase}, the algorithm identifies preliminary cluster centers through multiple breadth-first search (BFS) traversals. Specifically, it performs \( \sqrt{|\mathcal{V}|} \) iterations, with each iteration consisting of the selection of a high-degree node followed by a BFS expansion. Each BFS traversal explores the graph up to a predefined structural threshold (e.g., until a layer contains more than \( \sqrt{|\mathcal{V}|} \) nodes), resulting in a worst-case cost of 
\[
\mathcal{O}(|\mathcal{V}| + |\mathcal{E}|) 
\]
per traversal. Therefore, the cumulative time complexity of this phase is 
\[
\mathcal{O}(\sqrt{|\mathcal{V}|} \cdot (|\mathcal{V}| + |\mathcal{E}|)).
\]

In the \textbf{fine-grained refinement phase}, the algorithm recursively splits the coarse clusters (also referred to as granular-balls) using binary partitioning techniques. Each subcluster is refined through dual-source BFS traversals and local node reassignments, leading to an additional time complexity bounded by 
\[
\mathcal{O}(|\mathcal{V}| + |\mathcal{E}| \sqrt{|\mathcal{V}|}).
\] 
The use of a \( \sqrt{|\mathcal{V}|} \) scaling factor ensures that the number of recursive operations remains sublinear with respect to the total node count, while still maintaining meaningful partition granularity.

Consequently, the overall time complexity of AGHG remains subquadratic, making it particularly suitable for large-scale sparse and medium-density graphs. Its complexity bounds offer a favorable trade-off between computational efficiency and clustering accuracy when compared with classical graph clustering methods.

\section{Limitations and Future Research}
\label{Limitations}

Although the proposed \textbf{MGHRL} framework has demonstrated notable progress in modeling multi-granular high-order relationships within static hypergraphs, it still faces several limitations when applied to the increasingly common scenarios involving \textbf{temporal graphs}, such as human pose estimation~\cite{Xu_Zou_Lin_2022,LI2024106153,shang2024adamshyperadaptivemultiscalehypergraph} and traffic dynamics prediction~\cite{electronics13224435,8809901,10884295}.

\subsection{Limitations}

\paragraph{Limitations of Static Assumptions:}

The current MGHRL framework is primarily designed for static hypergraphs, where node relationships are assumed to remain invariant over time. Consequently, it does not explicitly model the temporal evolution of node and hyperedge structures. However, many real-world systems exhibit inherently dynamic characteristics, with relationships among entities evolving over time.

For example, in human pose estimation~\cite{Xu_Zou_Lin_2022,LI2024106153,shang2024adamshyperadaptivemultiscalehypergraph}, nodes can represent keypoints of the human body, and hyperedges may connect multiple joints simultaneously (e.g., arms and legs). In practice, these poses change continuously over time, and the temporal coupling between joints evolves accordingly. MGHRL, in its current form, lacks the capability to capture such temporal continuity and structural dynamics, thereby limiting its applicability to video-level dynamic human action recognition.

Similarly, in traffic prediction tasks~\cite{electronics13224435,8809901,10884295}, traffic flow within a road network exhibits significant temporal variability and strong spatial correlations across regions. The interdependence among different road segments can be naturally represented using hyperedges. However, these relationships are inherently dynamic and change over time. The hyperedge partitioning mechanism in MGHRL is statically defined, lacking the ability to perceive and adapt to the evolution of high-order regional structures, which diminishes its effectiveness in dynamic traffic modeling.

\paragraph{Scalability to Temporal Data:}

Moreover, the multi-granular modeling mechanism in MGHRL is not integrated with any temporal modeling component. In temporal graphs, nodes may transition from fine-grained local interactions to coarse-grained global collaborations as time progresses. However, the framework does not account for this dynamic evolution of granularity levels. For instance, in traffic systems, the spatial granularities may differ significantly between peak and off-peak hours. MGHRL lacks mechanisms to model such time-driven granularity transitions.

Additionally, MGHRL has not yet been empirically validated on widely used benchmark datasets for temporal graphs, such as NTU RGB+D for human activity recognition, or METR-LA and PEMS-BAY for urban traffic forecasting. Therefore, its generalization ability to real-world temporal tasks remains unclear and requires further investigation.

\subsection{Future Research Directions}

To address the above limitations and extend the applicability of MGHRL to dynamic systems, several future research directions are envisioned:

\paragraph{Learning Representations in Temporal Hypergraphs}

Future work can explore extending MGHRL for temporal hypergraph representation learning. One potential direction is to incorporate time-weighted graph distances or structural similarity measures that consider the temporal trajectories of node states. This would allow for the adaptive construction of temporally consistent hyperedges across successive time steps. In the context of human pose estimation, for example, the model should be capable of continuously tracking the time-varying coupling relationships between joints.

\paragraph{Integration with Temporal Graph Neural Networks}

MGHRL can also be integrated with existing temporal graph neural networks (TGNNs), such as TGAT and DySAT, to jointly model multi-granular relationships and temporal dynamics. Two promising directions include:
\begin{itemize}
  \item \textbf{Temporal Column Structures}: Incorporating recurrent (RNN) or temporal convolutional (TCN) columns within the MGHN architecture to enable effective feature aggregation across time steps.
  \item \textbf{Temporal Reversible Connections}: Enhancing the reversible connection mechanism in MGHRL to support the simultaneous propagation of granularity levels and temporal information, thereby improving temporal consistency and expressiveness.
\end{itemize}

\paragraph{Validation in Dynamic Systems}

Finally, it is essential to validate the improved framework on dynamic datasets, such as those representing social network evolution, traffic flow forecasting, and disease transmission modeling. This will help assess the framework's capability to:
\begin{itemize}
  \item Identify temporal high-order patterns, such as evolving community structures in social networks.
  \item Leverage historical multi-granular structures to predict future interactions.
\end{itemize}

\section{Broader Impacts}
\label{Broader Impacts}
This work aims to advance the understanding of hypergraph representation learning by introducing a novel multi-granularity modeling framework. We anticipate that the theoretical and empirical insights presented in this work will inspire future research in adaptive hypergraph construction and multi-scale graph neural architectures. This research does not involve human subjects, sensitive data, or ethically sensitive applications, and thus presents no known ethical concerns or negative societal impacts.

\end{document}